\documentclass[sigconf]{acmart}

\usepackage{microtype}
\usepackage{listings}

\usepackage{hyperref}

\usepackage{mathtools}
\usepackage{amsthm}

\usepackage[capitalize,noabbrev]{cleveref}

\theoremstyle{plain}
\newtheorem{theorem}{Theorem}[section]

\newtheorem{corollary}[theorem]{Corollary}
\theoremstyle{definition}

\theoremstyle{remark}

\usepackage{algorithm}
\usepackage{algorithmic}
\usepackage{listings}
\usepackage{xspace}
\usepackage{enumitem}
\usepackage{bm}
\usepackage{multirow}
\usepackage{subcaption}
\usepackage{balance}

\DeclareMathOperator*{\argmin}{arg\,min}
\DeclareMathOperator*{\argmax}{arg\,max}

\newcommand{\diag}{\text{diag}\xspace}

\AtBeginDocument{%
	}

\copyrightyear{2023} 
\acmYear{2023} 
\setcopyright{acmlicensed}\acmConference[KDD '23]{Proceedings of the 29th ACM SIGKDD Conference on Knowledge Discovery and Data Mining}{August 6--10, 2023}{Long Beach, CA, USA}
\acmBooktitle{Proceedings of the 29th ACM SIGKDD Conference on Knowledge Discovery and Data Mining (KDD '23), August 6--10, 2023, Long Beach, CA, USA}
\acmPrice{15.00}
\acmDOI{10.1145/3580305.3599501}
\acmISBN{979-8-4007-0103-0/23/08}

\settopmatter{printacmref=true}




\begin{document}

	\title{Sharpness-Aware Minimization Revisited: Weighted Sharpness as a Regularization Term}

	\author{Yun Yue}
	\authornote{Co-first authors with equal contributions.}
	\orcid{0000-0002-1430-7519}
	\affiliation{%
		\institution{Ant Group}
		\streetaddress{569 Xixi Rd}
		\city{Hangzhou}
		\state{Zhejiang}
		\country{China}
	}
	\email{yueyun.yy@antgroup.com}

	\author{Jiadi Jiang}
	\orcid{0000-0002-5998-0037}
	\authornotemark[1]
	\affiliation{%
		\institution{Ant Group}
		\streetaddress{569 Xixi Rd}
		\city{Hangzhou}
		\state{Zhejiang}
		\country{China}
	}
	\email{jiadi.jjd@antgroup.com}
	
	\author{Zhiling Ye}
	\orcid{0009-0001-1229-5876}
	\authornotemark[1]
	\affiliation{%
		\institution{Ant Group}
		\streetaddress{569 Xixi Rd}
		\city{Hangzhou}
		\state{Zhejiang}
		\country{China}
	}
	\email{yezhiling.yzl@antgroup.com}
	
	\author{Ning Gao}
	\orcid{0000-0002-4458-2132}
	\affiliation{%
		\institution{Ant Group}
		\streetaddress{East Tower Beijing World Finacial Centre, NO.1 East 3rd Ring Middle Road}
		\city{Beijing}
		\country{China}
	}
	\email{yunsheng.gn@antgroup.com}
	
	\author{Yongchao Liu}
	\orcid{0000-0003-3440-9675}
	\affiliation{%
		\institution{Ant Group}
		\streetaddress{569 Xixi Rd}
		\city{Hangzhou}
		\state{Zhejiang}
		\country{China}
	}
	\email{yongchao.ly@antgroup.com}
	
	\author{Ke Zhang}
	\orcid{0009-0008-6685-1293}
	\affiliation{%
		\institution{Ant Group}
		\streetaddress{East Tower Beijing World Finacial Centre, NO.1 East 3rd Ring Middle Road}
		\city{Beijing}
		\country{China}
	}
	\email{yingzi.zk@antgroup.com}

	\renewcommand{\shortauthors}{Yun Yue et al.}

	\begin{abstract}
		Deep Neural Networks (DNNs) generalization is known to be closely related to the flatness of minima, leading to the development of Sharpness-Aware Minimization (SAM) for seeking flatter minima and better generalization. In this paper, we revisit the loss of SAM and propose a more general method, called WSAM, by incorporating sharpness as a regularization term. We prove its generalization bound through the combination of PAC and Bayes-PAC techniques, and evaluate its performance on various public datasets. The results demonstrate that WSAM achieves improved generalization, or is at least highly competitive, compared to the vanilla optimizer, SAM and its variants. The code is available at this link\footnote{\url{https://github.com/intelligent-machine-learning/dlrover/tree/master/atorch/atorch/optimizers}}.
	\end{abstract}

	\begin{CCSXML}
		<ccs2012>
		<concept>
		<concept_id>10003752.10003809.10003716.10011138</concept_id>
		<concept_desc>Theory of computation~Continuous optimization</concept_desc>
		<concept_significance>500</concept_significance>
		</concept>
		</ccs2012>
	\end{CCSXML}

	\ccsdesc[500]{Theory of computation~Continuous optimization}

	\keywords{sharpness-aware minimization, optimization, regularization, WSAM}

	\maketitle

	\graphicspath{{figures}}

	\section{Introduction}
With the development of deep learning, DNNs with high over-parameterization have achieved tremendous success in various machine learning scenarios such as CV and NLP\footnote{\url{https://paperswithcode.com/sota/}}. Although the over-parameterized models are prone to overfit the training data \cite{rethink_generalization}, they do generalize well most of the time. The mystery of generalization has received increasing attention and has become a hot research topic in deep learning.

Recent works have revealed that the generalization is closely related to the flatness of minima, i.e., the flatter minima of the loss landscape could achieve lower generalization error \cite{flat_minima, generalization_gap, generalization_bound, entropy_sgd, swa, visualize_loss, understand_generalization}.
Sharpness-Aware Minimization (SAM) \cite{sam} is one of the most promising methods for finding flatter minima to improve generalization.
It is widely used in various fields, such as CV \cite{vit_sam}, NLP \cite{lm_sam} and bi-level learning \cite{maml_sam}, and has significantly outperformed the state-of-the-art (SOTA) method in those fields.

For the exploration of the flatter minima, SAM defines the sharpness of the loss function $L$ at $\bm{w}$ as follows:
\begin{equation}
	\tilde{L}(\bm{w}) := \underbrace{\max_{\|\bm{\delta}\|\leq\rho}L(\bm{w} + \bm{\delta})}_{L^{SAM}(\bm{w})} - L(\bm{w}).
	\label{eq:loss_sharp}
\end{equation}
\citet{gsam} proves that $\tilde{L}(\bm{w})$ is an approximation to the dominant eigenvalue of the Hessian at local minima, implying that $\tilde{L}(\bm{w})$ is indeed an effective metric of the sharpness. However, $\tilde{L}(\bm{w})$ can only be used to find flatter areas but not minima, which could potentially lead to convergence at a point where the loss is still large. Thus, SAM adopts $\tilde{L}(\bm{w}) + L(\bm{w})$, i.e., $L^{SAM}(\bm{w})$, to be the loss function. It can be thought of as a compromise between finding the flatter surface and the smaller minima by giving the same weights to $\tilde{L}(\bm{w})$ and $L(\bm{w})$.

In this paper, we rethink the construction of $L^{SAM}(\bm{w})$ and regard $\tilde{L}(\bm{w})$ as a regularization term. We develop a more general and effective algorithm, called WSAM (\textbf{W}eighted \textbf{S}harpness-\textbf{A}ware \textbf{M}inimization), whose loss function is regularized by a weighted sharpness term $\frac{\gamma}{1-\gamma}\tilde{L}(\bm{w})$, where the hyperparameter $\gamma$ controls the weight of sharpness. In Section~\ref{sec:4} we demonstrate how $\gamma$ directs the loss trajectory to find either flatter or lower minima. Our contribution can be summarized as follows.
\begin{itemize}[leftmargin=10pt]
    \item We propose WSAM, which regards the sharpness as a regularization and assigns different weights across different tasks. Inspired by \citet{weight_decouple}, we put forward a \text{``weight decouple''} technique to address the regularization in the final updated formula, aiming to reflect only the current step's sharpness. When the base optimizer is not simply \textsc{SGD} \cite{sgd}, such as \textsc{SgdM} \cite{momentum2} and \textsc{Adam} \cite{adam}, WSAM has significant differences in form compared to SAM. The ablation study shows that this technique can improve performance in most cases.
    \item We establish theoretically sound convergence guarantees in both convex and non-convex stochastic settings, and give a generalization bound by mixing PAC and Bayes-PAC techniques.
    \item We validate WSAM on a wide range of common tasks on public datasets. Experimental results show that WSAM yields better or highly competitive generalization performance versus SAM and its variants.
\end{itemize}

	\section{Related Work}
Several SAM variants have been developed to improve either effectiveness or efficiency. GSAM \cite{gsam} minimizes both $L^{SAM}(\bm{w})$ and $\tilde{L}(\bm{w})$ of Eq.~\eqref{eq:loss_sharp} simultaneously by employing the gradient projection technique. Compared to SAM, GSAM keeps $L^{SAM}(\bm{w})$ unchanged and decreases the surrogate gap, i.e. $\tilde{L}(\bm{w})$, by increasing $L(\bm{w})$. In other words, it gives more weights to $\tilde{L}(\bm{w})$ than $L(\bm{w})$ implicitly. ESAM \cite{esam} improves the efficiency of SAM without sacrificing accuracy by selectively applying SAM update with stochastic weight perturbation and sharpness-sensitivity data selection.

ASAM \cite{asam} and Fisher SAM \cite{fsam} try to improve the geometrical structure of the exploration area of $L^{SAM}(\bm{w})$. ASAM introduces adaptive sharpness, which normalizes the radius of the exploration region, i.e., replacing $\|\bm{\delta}\|$ of Eq.~\eqref{eq:loss_sharp} with $\|\bm{\delta} / \bm{w}\|$, to avoid the scale-dependent problem that SAM can suffer from. Fisher SAM employs another replacement by using $\sqrt{\|\bm{\delta}^T\diag(F)\bm{\delta}\|}$ as an intrinsic metric that can depict the underlying statistical manifold more accurately, where $F$ is the empirical Fisher information.

	\section{Preliminary}
\subsection{Notation}
We use lowercase letters to denote scalars, boldface lowercase letters to denote vectors, and uppercase letters to denote matrices. We denote a sequence of vectors by subscripts, that is, $\bm{x}_1, \dots, \bm{x}_t$ where $t \in [T] := \left\{1, 2, \dots, T\right\}$, and entries of each vector by an additional subscript, e.g., $x_{t, i}$. For any vectors $\bm{x}, \bm{y}\in\mathbb{R}^n$, we write $\bm{x}^{T}\bm{y}$ or $\bm{x}\cdot \bm{y}$ for the standard inner product, $\bm{x}\bm{y}$ for element-wise multiplication, $\bm{x} / \bm{y}$ for element-wise division, $\sqrt{\bm{x}}$ for element-wise square root, $\bm{x}^2$ for element-wise square. For the standard Euclidean norm, $\|\bm{x}\| = \|\bm{x}\|_2 = \sqrt{\left<\bm{x}, \bm{x}\right>}$. We also use $\|\bm{x}\|_{\infty} = \max_{i}|x^{(i)}|$ to denote $\ell_{\infty}$-norm, where $x^{(i)}$ is the $i$-th element of $\bm{x}$.

Let $\mathcal{S}_m$ be a training set of size $m$, i.e., $\mathcal{S}_m = \{(\bm{x}_i, y_i)\}_{i=1,\dots, m}$, where $\bm{x}_i\in\mathcal{X} \subseteq\mathbb{R}^k$ is an instance and $y_i\in\mathcal{Y}$ is a label. Denote the hypotheses space $\mathcal{H} = \{h_{\bm{w}}: \bm{w}\in\mathbb{R}^n\}$, where $h_{\bm{w}}(\cdot) : \mathcal{X}\rightarrow\mathcal{Y}$ is a hypothesis. Denote the training loss
\begin{equation*}
	L(\bm{w}) := \frac{1}{m}\sum_{k=1}^{m}\ell(h_{\bm{w}}(\bm{x}_k), y_k),
\end{equation*}
where $\ell(h_{\bm{w}}(\bm{x}), y)$ (we will often write $\ell(\bm{w})$ for simplicity) is a loss function measuring the performance of the parameter $\bm{w}$ on the example $(\bm{x}, y)$. Since it is inefficient to calculate the exact gradient in each optimization iteration when $m$ is large, we usually adopt a stochastic gradient with mini-batch, which is
\begin{equation*}
	g(\bm{w}) = \frac{1}{|\mathcal{B}|} \sum_{k\in\mathcal{B}} \nabla \ell(h_{\bm{w}}(\bm{x}_k), y_k),
\end{equation*}
where $\mathcal{B} \subset \{1, \dots, m\}$ is the sample set of size $|\mathcal{B}| \ll m$. Furthermore, let $\ell_t(\bm{w})$ be the loss function of the model at $t$-step.

\subsection{Sharpness-Aware Minimization}
SAM is a min-max optimization problem of solving $L^{SAM}(\bm{w})$ defined in Eq.~\eqref{eq:loss_sharp}.
First, SAM approximates the inner maximization problem using a first-order Taylor expansion around $\bm{w}$, i.e., 
\begin{equation*}
	\begin{aligned}
		\bm{\delta}^* & = \argmax_{\|\bm{\delta}\|\leq\rho} L(\bm{w} + \bm{\delta}) \approx \argmax_{\|\bm{\delta}\|\leq\rho} L(\bm{w}) + \bm{\delta}^T\nabla L(\bm{w}) \\
					 & = \rho\frac{\nabla L(\bm{w})}{\|\nabla L(\bm{w})\|}.
	\end{aligned}
\end{equation*}
Second, SAM updates $\bm{w}$ by adopting the approximate gradient of $L^{SAM}(\bm{w})$, which is 
\begin{equation*}
	\begin{aligned}
		&\nabla L^{SAM}(\bm{w}) \approx \nabla L(\bm{w} + \bm{\delta}^*) \\
       = & \nabla L(\bm{w}) |_{\bm{w} + \bm{\delta}^*} + \frac{d\bm{\delta}^*}{d\bm{w}} \nabla L(\bm{w}) |_{\bm{w} + \bm{\delta}^*} \approx \nabla L(\bm{w}) |_{\bm{w} + \bm{\delta}^*},
	\end{aligned}
\end{equation*}
where the second approximation is for accelerating the computation. Other gradient based optimizers (called base optimizer) can be incorporated into a generic framework of SAM, defined in Algorithm~\ref{alg:sam}. By varying $\bm{m}_t$ and $B_t$ of Algorithm~\ref{alg:sam}, we can obtain different base optimizers for SAM, such as \textsc{Sgd} \cite{sgd}, \textsc{SgdM} \cite{momentum2} and \textsc{Adam} \cite{adam}, see Tab.~\ref{tab:base_opt}. Note that when the base optimizer is SGD, Algorithm~\ref{alg:sam} rolls back to the original SAM in \citet{sam}.

\begin{algorithm}[htbp]
	\caption{Generic framework of SAM}
	\label{alg:sam}
	\begin{algorithmic}[1]
		\STATE {\bfseries Input:} parameters $\rho, \epsilon > 0$,
		$\bm{w}_1 \in \mathbb{R}^n$, step size $\{\alpha_t\}_{t=1}^T$, sequence of functions $\{\phi_t, \psi_t\}_{t=1}^T$
		\FOR{$t=1$ {\bfseries to} $T$}
		\STATE $\tilde{\bm{g}}_t = \nabla \ell_t(\bm{w}_t)$
		\STATE $\bm{\delta}_t = \rho\tilde{\bm{g}}_t / (\|\tilde{\bm{g}}_t\| + \epsilon)$
		\STATE $\bm{g}_t = \nabla \ell_t(\bm{w}_t + \bm{\delta}_t)$
		\STATE $\bm{m}_t = \phi_t(\bm{g}_1, \dots, \bm{g}_t)$ and $B_t = \psi_t(\bm{g}_1, \dots, \bm{g}_t)$
		\STATE $\bm{w}_{t+1} = \bm{w}_t - \alpha_t B_t^{-1} \bm{m}_t$
		\ENDFOR
	\end{algorithmic}
\end{algorithm}

\begin{table}[!hbtp]
	\caption{
		Base optimizers by different $\bm{m}_t$ and $B_t$.
	}
	\centering
	\label{tab:base_opt}
	\centering
	{\setlength{\tabcolsep}{3pt}
	\begin{tabular}{lcc}
		\toprule
		\textbf{Optimizer} & \bm{$m_t$} & \bm{$B_t$} \\
		\midrule
			\textsc{Sgd} & $\bm{g}_t$ & $\mathbb{I}$ \\
		
			\textsc{SgdM}  & $\sum_{i=0}^{t-1}\gamma^{i}\bm{g}_{t-i}$ & $\mathbb{I}$ \\
		
			\textsc{Adam} & $\frac{1 - \beta_{1}}{1 - \beta_{1}^t}\sum_{i=0}^{t-1}\bm{g}_{t-i}\beta_{1}^{i}$ & $\diag(\sqrt{\frac{1 - \beta_{2}}{1 - \beta_{2}^t}\sum_{i=1}^{t}\bm{g}_{i}^2\beta_{2}^{t-i}} + \epsilon)$ \\
	 	
		\bottomrule
	\end{tabular}
	}
\end{table}

	\section{Algorithm}
\label{sec:4}
\subsection{Details of WSAM}
\label{sec:4.1}
In this section, we give the formal definition of $L^{WSAM}$, which composes of a vanilla loss and a sharpness term. From Eq.~\eqref{eq:loss_sharp}, we have
\begin{equation}
	\begin{aligned}
		L^{WSAM}(\bm{w}) &:= L(\bm{w}) + \frac{\gamma}{1 - \gamma}\tilde{L}(\bm{w}) \\
		&= \frac{1 - 2\gamma}{1 - \gamma}L(\bm{w}) + \frac{\gamma}{1 - \gamma}L^{SAM}(\bm{w}),
	\end{aligned}
	\label{eq:WSAM}
\end{equation}
where $\gamma\in[0, 1)$. When $\gamma = 0$, $L^{WSAM}(\bm{w})$ degenerates to the vanilla loss; when $\gamma = 1 / 2$, $L^{WSAM}(\bm{w})$ is equivalent to $L^{SAM}(\bm{w})$; when $\gamma > 1 / 2$, $L^{WSAM}(\bm{w})$ gives more weights to the sharpness, and thus prone to find the point which has smaller curvature rather than smaller loss compared with SAM; and vice versa.

The generic framework of WSAM, which incorporates various optimizers by choosing different $\phi_t$ and $\psi_t$, is listed in Algorithm~\ref{alg:WSAM}. For example, when $\phi_t = \bm{g}_t$ and $\psi_t = \mathbb{I}$, we derive SGD with WSAM, which is listed in Algorithm~\ref{alg:SGD_WSAM}. Here, motivated by \citet{weight_decouple}, we adopt a \text{``weight decouple''} technique, i.e., the sharpness term $\tilde{L}(\bm{w})$ is not integrated into the base optimizer to calculate the gradients and update weights, but is calculated independently (the last term in Line 7 of Algorithm~\ref{alg:WSAM}). In this way the effect of the regularization just reflects the sharpness of the current step without additional information. For comparison, WSAM without \text{``weight decouple''}, dubbed Coupled-WSAM, is listed in Algorithm~\ref{alg:coupled-WSAM} of Section~\ref{sec:6.5}. For example, if the base optimizer is \textsc{SgdM} \cite{momentum2}, the update regularization term of Coupled-WSAM is the exponential moving averages of the sharpness. As shown in Section~\ref{sec:6.5}, \text{``weight decouple''} can improve the performance in most cases.

Fig.~\ref{fig:wsam-direction} depicts the WSAM update with different choices of $\gamma$. $\nabla{}L^{WSAM}(\bm{w})$ is between in $\nabla{}L(\bm{w})$ and $\nabla{}L^{SAM}(\bm{w})$ when $\gamma < \frac{1}{2}$, and gradually drift away $\nabla{}L(\bm{w})$ as $\gamma$ grows larger.

\begin{algorithm}[!htbp]
	\caption{Generic framework of WSAM}
	\label{alg:WSAM}
	\begin{algorithmic}[1]
		\STATE {\bfseries Input:} parameters $\rho,\epsilon > 0$, $\gamma\in[0,1)$,
		$\bm{w}_1 \in \mathbb{R}^n$, step size $\{\alpha_t\}_{t=1}^T$, sequence of functions $\{\phi_t, \psi_t\}_{t=1}^T$
		\FOR{$t=1$ {\bfseries to} $T$}
		\STATE $\tilde{\bm{g}}_t = \nabla \ell_t(\bm{w}_t)$
		\STATE $\bm{\delta}_t = \rho\tilde{\bm{g}}_t / (\|\tilde{\bm{g}}_t\| + \epsilon)$
		\STATE $\bm{g}_t = \nabla \ell_t(\bm{w}_t + \bm{\delta}_t)$
		\STATE $\tilde{\bm{m}}_t = \phi_t(\tilde{\bm{g}}_1, \dots, \tilde{\bm{g}}_t)$ and $\tilde{B}_t = \psi_t(\tilde{\bm{g}}_1, \dots, \tilde{\bm{g}}_t)$
		\STATE $\bm{w}_{t+1} = \bm{w}_t - \alpha_t \left(\tilde{B}_{t}^{-1}\tilde{\bm{m}}_t + \frac{\gamma}{1 - \gamma}(\bm{g}_t - \tilde{\bm{g}}_t)\right)$
		\ENDFOR
	\end{algorithmic}
\end{algorithm}

\begin{algorithm}[!htbp]
	\caption{SGD with WSAM}
	\label{alg:SGD_WSAM}
	\begin{algorithmic}[1]
		\STATE {\bfseries Input:} parameters $\rho,\epsilon > 0$, $\gamma\in[0,1)$,
		$\bm{w}_1 \in \mathbb{R}^n$, step size $\{\alpha_t\}_{t=1}^T$
		\FOR{$t=1$ {\bfseries to} $T$}
		\STATE $\tilde{\bm{g}}_t = \nabla \ell_t(\bm{w}_t)$
		\STATE $\bm{\delta}_t = \rho\tilde{\bm{g}}_t / (\|\tilde{\bm{g}}_t\| + \epsilon)$
		\STATE $\bm{g}_t = \nabla \ell_t(\bm{w}_t + \bm{\delta}_t)$
		\STATE $\bm{w}_{t+1} = \bm{w}_t - \alpha_t \left(\frac{\gamma}{1-\gamma}\bm{g}_t + \frac{1 - 2\gamma}{1 - \gamma}\tilde{\bm{g}}_t\right)$
		\ENDFOR
	\end{algorithmic}
\end{algorithm}

\begin{figure}[!htpb]
	\centering
	\includegraphics[width=0.8\linewidth]{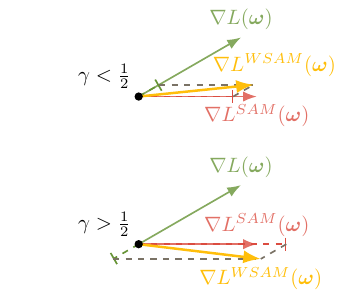}
	\caption{How WSAM updates on the choice of $\gamma$.}
	\label{fig:wsam-direction}
\end{figure}

\subsection{Toy Example}
To better illustrate the effect and benefit of $\gamma$ in WSAM, we setup a 2D toy example, similar to \citet{fsam}. As shown in Fig.~\ref{fig:wsam-traj}, the loss function contains a sharp minimum on the lower left (valued 0.28 at around $(-16.8, 12.8)$) and a flat minimum on the upper right (valued 0.36 at around $(19.8, 29.9)$).
The loss is defined as
\begin{equation*}
	L(\bm{w})
	= - \log \left(0.7e^{-K_1\left(\bm{w}\right)/1.8^2}+0.3e^{-K_2\left(\bm{w}\right)/1.2^2}\right),
\end{equation*}
while $K_i(\bm{w})=K_i(\mu, \sigma)$ is the KL divergence between the univariate Gaussian model and the two normal distributions, which is
\begin{equation*}
	K_i(\mu, \sigma) = \log \frac{\sigma_i}{\sigma} + \frac{\sigma^2+(\mu-\mu_i)^2}{2\sigma_i^2} - \frac{1}{2},
\end{equation*}
where $(\mu_1, \sigma_1) = (20, 30)$ and $(\mu_2, \sigma_2) = (-20, 10)$.

We use \textsc{SgdM} with momentum 0.9 as the base optimizer, and set $\rho=2$ for both SAM and WSAM. Starting from the initial point $(-6, 10)$, the loss function is optimized for 150 steps with a learning rate of 5.
SAM converges to the lower but sharper minimum, as well as WSAM with $\gamma=0.6$. However, a larger $\gamma=0.95$
leads to the flat minimum, because a stronger sharpness regularization comes to effect.

\begin{figure}[!htpb]
	\centering
	\includegraphics[width=0.8\linewidth]{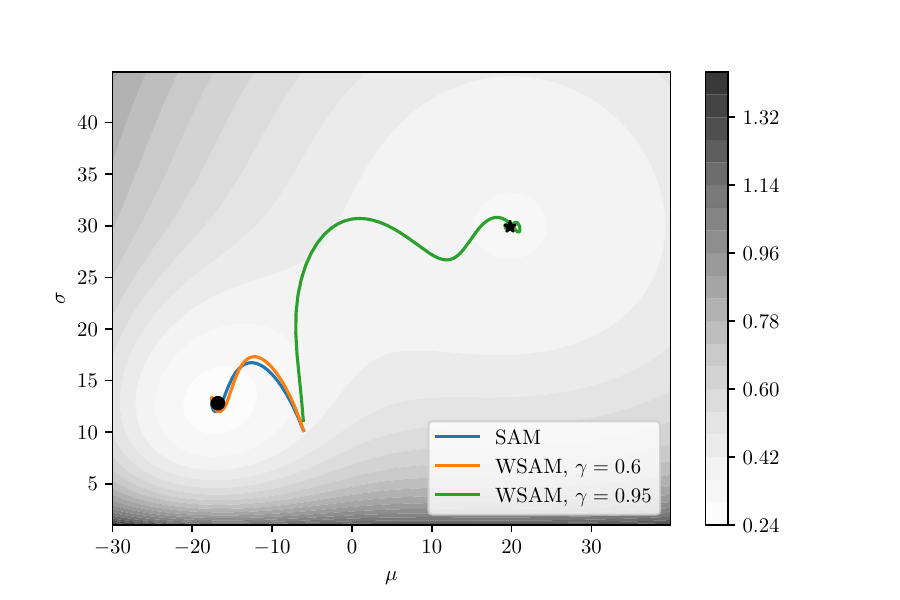}
	\caption{WSAM can achieve different minima by choosing different $\gamma$.}
	\label{fig:wsam-traj}
\end{figure}

\section{Theoretical Analysis}
\label{sec:5}
\subsection{Convergence of WSAM}
In this section, we choose SGD as the base optimizer for simplicity, i.e., Algorithm~\ref{alg:SGD_WSAM}, and use $\rho_t$ to replace $\rho$ of Algorithm~\ref{alg:SGD_WSAM} where $\rho_t$ is non-increasing.
We have the following convergence theorems for both convex and non-convex settings.
\begin{theorem}
	(Convergence in convex settings) Let $\{\bm{w}_t\}$ be the sequence  obtained by Algorithm \ref{alg:SGD_WSAM}, $\alpha_t = \alpha / \sqrt{t}$, $\rho_{t}\leq\rho$, $\|\bm{g}_t\|_{\infty} \leq G_{\infty}, \|\tilde{\bm{g}}_t\|_{\infty} \leq G_{\infty} \ \forall{}t\in[T]$. Suppose $\ell_t(\bm{w})$ is convex and $\eta$-smooth, i.e., $\|\nabla\ell_t(\bm{u}) - \nabla\ell_t(\bm{v})\| \leq \eta\|\bm{u} - \bm{v}\|$, for all $t\in{}[T]$, $\bm{w}^*$ is an optimal solution of $\sum_{t=1}^{T}\ell_t(\bm{w})$, i.e., $\bm{w}^* = \argmin_{\bm{w}\in\mathbb{R}^n} \sum_{t=1}^{T}\ell_t(\bm{w})$ and there exists the constant $D_{\infty}$ such that $\max_{t\in[T]}\|\bm{w}_t - \bm{w}^*\|_{\infty}\leq D_{\infty}$. Then we have the following bound on the regret
	\begin{equation*}
		\begin{aligned}
			&\sum_{t=1}^{T}\left(\ell_t(\bm{w}_t) - \ell_t(\bm{w}^*)\right) \leq C_1\sqrt{T} + C_2\sum_{t=1}^{T}\rho_{t},
		\end{aligned}
	\end{equation*}
	\label{th:convex}
	where $C_1$ and $C_2$ are defined as follows:
	\begin{equation*}
		C_1 = \frac{nD_{\infty}^2}{2\alpha} + \frac{10\gamma^2 - 8\gamma + 2}{(1 - \gamma)^2}nG_{\infty}^2, \ C_2 = \frac{\sqrt{n}D_{\infty}\eta\gamma}{1 - \gamma}.
	\end{equation*}
\end{theorem}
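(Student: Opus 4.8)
The plan is to treat Algorithm~\ref{alg:SGD_WSAM} as a perturbed form of online gradient descent and to run the classical regret-telescoping argument, isolating the ascent perturbation as an error term that smoothness controls. Write the update as $\bm{w}_{t+1} = \bm{w}_t - \alpha_t \bm{d}_t$ with $\bm{d}_t = \frac{\gamma}{1-\gamma}\bm{g}_t + \frac{1-2\gamma}{1-\gamma}\tilde{\bm{g}}_t$. The crucial first observation is that the two coefficients sum to one, so
\begin{equation*}
    \bm{d}_t = \tilde{\bm{g}}_t + \frac{\gamma}{1-\gamma}\left(\bm{g}_t - \tilde{\bm{g}}_t\right),
\end{equation*}
i.e., the step is the true gradient $\tilde{\bm{g}}_t = \nabla\ell_t(\bm{w}_t)$ at the current iterate plus a correction built from the gap between the gradients at the perturbed point $\bm{w}_t+\bm{\delta}_t$ and at $\bm{w}_t$. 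This decomposition is what lets convexity act on the genuine gradient while the sharpness-induced correction is quarantined into the $\rho_t$ term.

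Next I would expand $\|\bm{w}_{t+1}-\bm{w}^*\|^2 = \|\bm{w}_t-\bm{w}^*\|^2 - 2\alpha_t\langle\bm{d}_t, \bm{w}_t-\bm{w}^*\rangle + \alpha_t^2\|\bm{d}_t\|^2$ and solve for the inner product. Substituting the decomposition above and invoking convexity through $\langle\tilde{\bm{g}}_t,\bm{w}_t-\bm{w}^*\rangle \ge \ell_t(\bm{w}_t)-\ell_t(\bm{w}^*)$ yields, after moving the correction to the right-hand side,
\begin{equation*}
    \ell_t(\bm{w}_t)-\ell_t(\bm{w}^*) \le \frac{1}{2\alpha_t}\left(\|\bm{w}_t-\bm{w}^*\|^2 - \|\bm{w}_{t+1}-\bm{w}^*\|^2\right) + \frac{\alpha_t}{2}\|\bm{d}_t\|^2 + \frac{\gamma}{1-\gamma}\left\langle \tilde{\bm{g}}_t-\bm{g}_t, \bm{w}_t-\bm{w}^*\right\rangle.
\end{equation*}
Summing over $t\in[T]$ then splits the regret into three pieces, each handled separately.

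For the correction term I would use $\eta$-smoothness together with $\|\bm{\delta}_t\|\le\rho_t$ to obtain $\|\bm{g}_t-\tilde{\bm{g}}_t\| = \|\nabla\ell_t(\bm{w}_t+\bm{\delta}_t)-\nabla\ell_t(\bm{w}_t)\|\le\eta\rho_t$, and then Cauchy--Schwarz with the norm conversion $\|\bm{w}_t-\bm{w}^*\|\le\sqrt{n}\,\|\bm{w}_t-\bm{w}^*\|_\infty\le\sqrt{n}D_\infty$; this produces precisely the $C_2\sum_t\rho_t$ contribution with $C_2 = \frac{\sqrt{n}D_\infty\eta\gamma}{1-\gamma}$. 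For the telescoping term, summation by parts with $\alpha_t=\alpha/\sqrt{t}$ and $\|\bm{w}_t-\bm{w}^*\|^2\le nD_\infty^2$ collapses the sum to $\frac{nD_\infty^2}{2\alpha}\sqrt{T}$. For the last term, the elementary inequality $\|\bm{x}+\bm{y}\|^2\le 2\|\bm{x}\|^2+2\|\bm{y}\|^2$ together with $\|\bm{g}_t\|,\|\tilde{\bm{g}}_t\|\le\sqrt{n}G_\infty$ gives
\begin{equation*}
    \|\bm{d}_t\|^2 \le \frac{2\gamma^2}{(1-\gamma)^2}\|\bm{g}_t\|^2 + \frac{2(1-2\gamma)^2}{(1-\gamma)^2}\|\tilde{\bm{g}}_t\|^2 \le \frac{10\gamma^2-8\gamma+2}{(1-\gamma)^2}\,nG_\infty^2,
\end{equation*}
and $\sum_t\alpha_t\le 2\alpha\sqrt{T}$ turns this into the remaining $\sqrt{T}$ term, so that the two $\sqrt{T}$ contributions assemble into $C_1$.

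The routine parts are the quadratic expansion and the two standard summations. The one step that requires genuine care, and is the heart of the argument, is the coefficient-sum-to-one rewriting of $\bm{d}_t$: it is exactly what permits the convexity inequality to be applied to the clean gradient $\tilde{\bm{g}}_t$ and confines the entire effect of the ascent perturbation to a smoothness-bounded remainder of order $\rho_t$. Without it one would be forced to relate $\bm{g}_t$, a gradient evaluated at the shifted point $\bm{w}_t+\bm{\delta}_t$, directly to the suboptimality gap $\ell_t(\bm{w}_t)-\ell_t(\bm{w}^*)$, which convexity does not supply; recognizing and exploiting this algebraic identity is therefore the main obstacle to setting up the bound cleanly.
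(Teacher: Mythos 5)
Your proposal is correct and follows essentially the same route as the paper's own proof: the same rewriting of the update direction as $\tilde{\bm{g}}_t + \frac{\gamma}{1-\gamma}(\bm{g}_t - \tilde{\bm{g}}_t)$, the same expansion of $\|\bm{w}_{t+1}-\bm{w}^*\|^2$ with convexity applied to $\tilde{\bm{g}}_t$, the same smoothness bound $\|\bm{g}_t-\tilde{\bm{g}}_t\|\leq\eta\rho_t$ yielding the $C_2$ term, and the same telescoping plus $\sum_t \alpha_t \leq 2\alpha\sqrt{T}$ assembling $C_1$. No meaningful differences to report.
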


Here, to ensure the condition $\max_{t\in[T]}\|\bm{w}_t - \bm{w}^*\|_{\infty}\leq D_{\infty}$ holds, we can make the assumption that the domain $\mathcal{W}\subseteq\mathbb{R}^n$ is bounded and project the sequence $\{\bm{w}_t\}$ onto $\mathcal{W}$ by setting $\bm{w_{t+1}} = \Pi{}_{\mathcal{W}}\left(\bm{w}_t - \alpha_t \left(\frac{\gamma}{1-\gamma}\bm{g}_t + \frac{1 - 2\gamma}{1 - \gamma}\tilde{\bm{g}}_t\right)\right)$.

\begin{proof}
	Let $\bm{h}_t = \frac{\gamma}{1 - \gamma}\bm{g}_t + \frac{1 - 2\gamma}{1 - \gamma}\tilde{\bm{g}}_t$. Since
	\begin{equation}
		\begin{aligned}
			&\|\bm{w}_{t+1} - \bm{w}^*\|^2 = \left\|\bm{w}_t - \alpha_t\left(\frac{\gamma}{1 - \gamma}\bm{g}_t + \frac{1 - 2\gamma}{1 - \gamma}\tilde{\bm{g}}_t\right) - \bm{w}^*\right\|^2 = \|\bm{w}_t - \bm{w}^*\|^2 \\
			& - 2\alpha_t\left<\bm{w}_t - \bm{w}^*, \tilde{\bm{g}}_t\right> - 2\alpha_{t}\left<\bm{w}_t - \bm{w}^*, \frac{\gamma}{1 - \gamma}(\bm{g}_t - \tilde{\bm{g}}_t)\right> + \alpha_t^2\left\| \bm{h}_t \right\|^2,
		\end{aligned}
		\label{eq:th1_1}
	\end{equation}
	then rearranging Eq.~\eqref{eq:th1_1}, we have
	\begin{equation*}
		\begin{aligned}
			&\left<\bm{w}_t - \bm{w}^*, \tilde{\bm{g}}_t\right> = \frac{1}{2\alpha_{t}}\left(\|\bm{w}_t - \bm{w}^*\|^2 - \|\bm{w}_{t+1} - \bm{w}^*\|^2\right) \\
			& - \frac{\gamma}{1-\gamma}<\bm{w_t} - \bm{w}^*, \bm{g}_t - \tilde{\bm{g}}_t> + \frac{\alpha_{t}}{2}\|\bm{h}_t\|^2  \\
			\leq& \frac{1}{2\alpha_{t}}\left(\|\bm{w}_t - \bm{w}^*\|^2 - \|\bm{w}_{t+1} - \bm{w}^*\|^2\right) + \frac{\gamma}{1-\gamma}\|\bm{w}_t - \bm{w}^*\|\|\bm{g}_t - \tilde{\bm{g}}_t\| \\
			& + \alpha_{t}\left(\frac{(1-2\gamma)^2}{(1-\gamma)^2}\|\bm{g}_t\|^2 + \frac{\gamma^2}{(1-\gamma)^2}\|\tilde{\bm{g}}_t\|^2\right)  \\
			\leq& \frac{1}{2\alpha_{t}}\left(\|\bm{w}_t - \bm{w}^*\|^2 - \|\bm{w}_{t+1} - \bm{w}^*\|^2\right) + \frac{\gamma}{1-\gamma}\sqrt{n}D_{\infty}\eta\rho_{t} \\
			& + \frac{5\gamma^2 - 4\gamma + 1}{(1 - \gamma)^2}nG_{\infty}^2\alpha_{t},
		\end{aligned}
	\end{equation*}
	where the first inequality follows from Cauchy-Schwartz inequality and $ab\leq\frac{1}{2}(a^2 + b^2)$, and the second inequality follows from the $\eta$-smooth of $\ell_t(\bm{w})$, $\|\bm{w}_t - \bm{w}^*\|_{\infty}\leq{}D_{\infty}$ and $\|\bm{g}_t\|_{\infty}\leq{}G_{\infty}$. Thus, the regret
	\begin{equation*}
		\begin{aligned}
			& \sum_{t=1}^{T}\left(\ell_t(\bm{w}_t) - \ell_t(\bm{w}^*)\right) \leq\sum_{t=1}^{T}\left<\bm{w}_t - \bm{w}^*,
			\tilde{\bm{g}}_t\right> \\
			\leq& \sum_{t=1}^{T}\bigg[\frac{1}{2\alpha_{t}}\left(\|\bm{w}_t - \bm{w}^*\|^2 - \|\bm{w}_{t+1} - \bm{w}^*\|^2\right) + \frac{\gamma}{1-\gamma}\sqrt{n}D_{\infty}\eta\rho_{t} \\
			& + \frac{5\gamma^2 - 4\gamma + 1}{(1 - \gamma)^2}nG_{\infty}^2\alpha_{t}\bigg] \\
			\leq& \frac{1}{2\alpha_{1}}\|\bm{w}_1 - \bm{w}^*\|^2 + \sum_{t=2}^{T}\frac{\|\bm{w}_t - \bm{w}^*\|^2}{2}\left(\frac{1}{\alpha_{t}} - \frac{1}{\alpha_{t-1}}\right) + \sum_{t=1}^{T}\frac{\sqrt{n}D_{\infty}\eta\gamma}{1 - \gamma}\rho_{t} \\
			& + \sum_{t=1}^{T}\frac{5\gamma^2 - 4\gamma + 1}{(1 - \gamma)^2}nG_{\infty}^2\alpha_{t} \\
			\leq& \frac{nD_{\infty}^2}{2\alpha} + \frac{nD_{\infty}^2}{2}\sum_{t=2}^{T}\left(\frac{1}{\alpha_{t}} - \frac{1}{\alpha_{t-1}}\right) + \sum_{t=1}^{T}\frac{\sqrt{n}D_{\infty}\eta\gamma}{1 - \gamma}\rho_{t} \\
			& + \sum_{t=1}^{T}\frac{5\gamma^2 - 4\gamma + 1}{(1 - \gamma)^2}nG_{\infty}^2\alpha_{t} \\
			\leq& \frac{nD_{\infty}^2}{2\alpha}\sqrt{T} + \sum_{t=1}^{T}\frac{\sqrt{n}D_{\infty}\eta\gamma}{1 - \gamma}\rho_{t} + \frac{10\gamma^2 - 8\gamma + 2}{(1 - \gamma)^2}nG_{\infty}^2\sqrt{T},
		\end{aligned}
	\end{equation*}
	where the first inequality follows from the convexity of $\ell_t(\bm{w})$ and the last inequality follows from
	\begin{equation*}
		\begin{aligned}
			\sum_{t=1}^{T}\frac{1}{\sqrt{t}} &= 1 + \int_{2}^{3}\frac{1}{\sqrt{2}}ds + \cdots + \int_{T-1}^{T}\frac{1}{\sqrt{T}}ds \\
			& < 1 + \int_{2}^{3}\frac{1}{\sqrt{s - 1}}ds + \cdots + \int_{T-1}^{T}\frac{1}{\sqrt{s - 1}}ds \\
			& = 1 + \int_{2}^{T}\frac{1}{\sqrt{s - 1}}ds = 2\sqrt{T-1} - 1 < 2\sqrt{T}.
		\end{aligned}
	\end{equation*}
	This completes the proof.
\end{proof}

\begin{corollary}
	Suppose $\rho_{t} = \rho / \sqrt{t}$, then we have
	\begin{equation*}
		\begin{aligned}
			&\sum_{t=1}^{T}\left(\ell_t(\bm{w}_t) - \ell_t(\bm{w}^*)\right) \leq (C_1 + 2\rho{}C_2)\sqrt{T},
		\end{aligned}
	\end{equation*}
	where $C_1$ and $C_2$ are the same with Theorem~\ref{th:convex}.
	\label{coro:convex}
\end{corollary}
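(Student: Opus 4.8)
The plan is to invoke the regret bound already established in Theorem~\ref{th:convex} and simply specialize the choice of the perturbation radius. Theorem~\ref{th:convex} gives, for any non-increasing $\rho_t \leq \rho$, the bound
\begin{equation*}
	\sum_{t=1}^{T}\left(\ell_t(\bm{w}_t) - \ell_t(\bm{w}^*)\right) \leq C_1\sqrt{T} + C_2\sum_{t=1}^{T}\rho_{t},
\end{equation*}
so the only task is to control the sum $\sum_{t=1}^{T}\rho_t$ under the particular schedule $\rho_t = \rho/\sqrt{t}$. Note that this schedule is admissible because it is non-increasing and satisfies $\rho_t \leq \rho$ for all $t$, so Theorem~\ref{th:convex} applies verbatim.

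First I would substitute $\rho_t = \rho/\sqrt{t}$ into the second term, factoring out the constant $\rho$ to obtain $C_2\sum_{t=1}^{T}\rho_t = \rho C_2\sum_{t=1}^{T}1/\sqrt{t}$. Then I would reuse the integral-comparison estimate proved at the end of Theorem~\ref{th:convex}, namely $\sum_{t=1}^{T}1/\sqrt{t} < 2\sqrt{T}$, which bounds the harmonic-type sum by dominating each summand with $\int 1/\sqrt{s-1}\,ds$. This immediately yields $C_2\sum_{t=1}^{T}\rho_t < 2\rho C_2\sqrt{T}$.

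Finally, combining this with the first term $C_1\sqrt{T}$ gives
\begin{equation*}
	\sum_{t=1}^{T}\left(\ell_t(\bm{w}_t) - \ell_t(\bm{w}^*)\right) \leq C_1\sqrt{T} + 2\rho C_2\sqrt{T} = (C_1 + 2\rho C_2)\sqrt{T},
\end{equation*}
which is exactly the claimed $\mathcal{O}(\sqrt{T})$ regret, implying a vanishing average regret as $T \to \infty$. Honestly, there is no real obstacle here: the corollary is a direct specialization of Theorem~\ref{th:convex}, and the only nontrivial ingredient, the bound $\sum_{t=1}^{T}1/\sqrt{t} < 2\sqrt{T}$, has already been derived in that proof. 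The conceptual point worth emphasizing is that choosing $\rho_t = \rho/\sqrt{t}$ makes the sharpness-regularization term decay at the same rate as the optimization term, so the extra contribution stays within the $\sqrt{T}$ order rather than degrading the overall convergence rate.
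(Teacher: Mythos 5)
Your proposal is correct and follows essentially the same route as the paper: plug the schedule $\rho_t = \rho/\sqrt{t}$ into the bound of Theorem~\ref{th:convex} and apply the estimate $\sum_{t=1}^{T} 1/\sqrt{t} < 2\sqrt{T}$ already derived there. Nothing further is needed.
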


\begin{proof}
	Since $\rho_{t} = \rho / \sqrt{t}$, we have
	\begin{equation*}
		\begin{aligned}
			\sum_{t=1}^{T}\rho_{t} &= \rho\sum_{t=1}^{T}\frac{1}{\sqrt{t}} < 2\rho\sqrt{T}.
		\end{aligned}
	\end{equation*}
	This completes the proof.
\end{proof}

 Corollary~\ref{coro:convex} implies the regret is $O(\sqrt{T})$ and can achieve the convergence rate $O(1 / \sqrt{T})$ in convex settings.

\begin{theorem}
	(Convergence in non-convex settings) Suppose that the following assumptions are satisfied:
	\begin{enumerate}[leftmargin=14pt]
		\item $L$ is differential and lower bounded, i.e., $L(\bm{w}^*) > -\infty$ where $\bm{w}^*$ is an optimal solution. $L$ is also $\eta$-smooth, i.e., $\forall{}\bm{u}, \bm{v}\in\mathbb{R}^n$, we have
		$$L(\bm{u}) \leq L(\bm{v}) + \left<\nabla{}L(\bm{v}), \bm{u} - \bm{v}\right> + \frac{\eta}{2}\|\bm{u} - \bm{v}\|^2.$$
		\item At step $t$, the algorithm can access the bounded noisy gradients and the true gradient is bounded, i.e., $\|\bm{g}_t\|_{\infty}\leq G_{\infty}, \|\tilde{\bm{g}}_t\|_{\infty} \leq G_{\infty}, \|\nabla{}L(\bm{w}_t)\|_{\infty}\leq G_{\infty}, \forall{}t\in[T]$.
		\item The noisy gradient is unbiased and the noise is independent, i.e., $\tilde{\bm{g}}_t = \nabla{}L(\bm{w}_t) + \bm{\zeta}_{t}, \mathbb{E}[\bm{\zeta}_{t}] = \bm{0}$ and $\bm{\zeta}_{i}$ is independent of $\bm{\zeta}_{j}$ if $i\neq j$.
		\item $\alpha_t = \alpha / \sqrt{t}$, $\rho_{t}\leq\rho$,  $\forall{}t\in[T]$.
	\end{enumerate}
	Then Algorithm~\ref{alg:SGD_WSAM} yields
	\begin{equation*}
		\begin{aligned}
			& \min_{t\in[T]}\mathbb{E}[\|\nabla{}L(\bm{w}_t)\|^2] \\
	  \leq& \frac{1}{\sqrt{T} - 1}\left(C_3 + C_5 + C_4\sum_{t=1}^{T}\alpha_{t}\rho_{t} + C_5\log{}T\right),
		\end{aligned}
	\end{equation*}
	where $C_3$, $C_4$ and $C_5$ are defined as follows:
	\begin{equation*}
		\begin{aligned}
			C_3 &= \frac{L(\bm{w}_1)}{2\alpha}, \ C_4 = \frac{\sqrt{n}G_{\infty}\eta\gamma}{2\alpha(1 - \gamma)}, \\
			C_5 &= \frac{5\gamma^2 - 4\gamma + 1}{2(1 - \gamma)^2}nG_{\infty}^2\eta\alpha.
		\end{aligned}
	\end{equation*}
	\label{th:non-convex}
\end{theorem}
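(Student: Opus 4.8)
The plan is to run the standard descent-lemma plus telescoping argument for nonconvex SGD, adapted to the WSAM search direction. Reusing the notation of the convex proof, set $\bm{h}_t := \frac{\gamma}{1-\gamma}\bm{g}_t + \frac{1-2\gamma}{1-\gamma}\tilde{\bm{g}}_t$ so that the update reads $\bm{w}_{t+1} = \bm{w}_t - \alpha_t\bm{h}_t$. First I would invoke the $\eta$-smoothness of $L$ (Assumption 1) to obtain the one-step descent inequality
\begin{equation*}
L(\bm{w}_{t+1}) \leq L(\bm{w}_t) - \alpha_t\langle\nabla L(\bm{w}_t), \bm{h}_t\rangle + \frac{\eta\alpha_t^2}{2}\|\bm{h}_t\|^2 .
\end{equation*}

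The crux is controlling $\langle\nabla L(\bm{w}_t), \bm{h}_t\rangle$ in expectation. Because the two WSAM weights sum to one, I would rewrite $\bm{h}_t = \tilde{\bm{g}}_t + \frac{\gamma}{1-\gamma}(\bm{g}_t - \tilde{\bm{g}}_t)$, splitting the inner product into a signal term $\langle\nabla L(\bm{w}_t), \tilde{\bm{g}}_t\rangle$ and a perturbation term $\frac{\gamma}{1-\gamma}\langle\nabla L(\bm{w}_t), \bm{g}_t - \tilde{\bm{g}}_t\rangle$. By Assumption 3 ($\tilde{\bm{g}}_t = \nabla L(\bm{w}_t) + \bm{\zeta}_t$ with $\bm{\zeta}_t$ independent of the past and mean zero) and the tower property, the conditional expectation of the signal term is exactly $\|\nabla L(\bm{w}_t)\|^2$. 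For the perturbation term I would keep it inside a smoothness estimate: since $\bm{\delta}_t = \rho_t\tilde{\bm{g}}_t/(\|\tilde{\bm{g}}_t\|+\epsilon)$ satisfies $\|\bm{\delta}_t\|\leq\rho_t$, smoothness gives $\|\bm{g}_t - \tilde{\bm{g}}_t\| = \|\nabla\ell_t(\bm{w}_t+\bm{\delta}_t)-\nabla\ell_t(\bm{w}_t)\|\leq\eta\rho_t$, and with Cauchy--Schwarz and $\|\nabla L(\bm{w}_t)\|\leq\sqrt{n}G_\infty$ (Assumption 2) it is bounded in magnitude by $\sqrt{n}G_\infty\eta\rho_t$. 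For the quadratic term I would use $\|\bm{h}_t\|^2 \leq 2\frac{\gamma^2}{(1-\gamma)^2}\|\bm{g}_t\|^2 + 2\frac{(1-2\gamma)^2}{(1-\gamma)^2}\|\tilde{\bm{g}}_t\|^2 \leq \frac{10\gamma^2-8\gamma+2}{(1-\gamma)^2}nG_\infty^2$ via $\|\cdot\|^2\leq n\|\cdot\|_\infty^2$; substituting $\alpha_t^2 = \alpha^2/t$ shows that $\frac{\eta\alpha_t^2}{2}\|\bm{h}_t\|^2 \leq \frac{2C_5\alpha}{t}$, which is precisely where the $5\gamma^2-4\gamma+1$ of $C_5$ comes from.

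Assembling these and taking total expectation gives, after rearranging,
\begin{equation*}
\alpha_t\,\mathbb{E}[\|\nabla L(\bm{w}_t)\|^2] \leq \mathbb{E}[L(\bm{w}_t)] - \mathbb{E}[L(\bm{w}_{t+1})] + \frac{\sqrt{n}G_\infty\eta\gamma}{1-\gamma}\alpha_t\rho_t + \frac{2C_5\alpha}{t}.
\end{equation*}
Summing over $t=1,\dots,T$, the first two terms telescope to $L(\bm{w}_1) - \mathbb{E}[L(\bm{w}_{T+1})] \leq L(\bm{w}_1)$ (using $L(\bm{w}_{T+1})\geq L(\bm{w}^*)\geq 0$), the perturbation sum contributes $\frac{\sqrt{n}G_\infty\eta\gamma}{1-\gamma}\sum_t\alpha_t\rho_t$, and $\sum_t\frac{1}{t}\leq 1+\log T$ turns the last term into $2C_5\alpha(1+\log T)$. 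To pass from $\sum_t\alpha_t\mathbb{E}[\|\nabla L(\bm{w}_t)\|^2]$ to the minimum, I would lower-bound $\sum_{t=1}^T\alpha_t = \alpha\sum_t t^{-1/2} \geq 2\alpha(\sqrt{T}-1)$ by comparison with $\int_1^{T+1}s^{-1/2}ds$, the same integral trick used in the convex proof. Dividing by $2\alpha(\sqrt{T}-1)$ then collects the constants into $C_3 + C_5 + C_4\sum_t\alpha_t\rho_t + C_5\log T$, yielding the claim.

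The main obstacle is the bookkeeping around the perturbation term $\bm{g}_t-\tilde{\bm{g}}_t$: one must route it through the smoothness estimate rather than the unbiasedness estimate, since $\bm{g}_t = \nabla\ell_t(\bm{w}_t+\bm{\delta}_t)$ is correlated with the step-$t$ noise and is not an unbiased gradient at $\bm{w}_t$, and one must check that the $\frac{\gamma}{1-\gamma}$ weighting and the $\alpha_t^2/t$ scaling reproduce the exact constants $C_4$ and $C_5$. Everything else is routine telescoping together with the two standard comparisons $\sum_t t^{-1/2}\geq 2(\sqrt{T}-1)$ and $\sum_t t^{-1}\leq 1+\log T$.
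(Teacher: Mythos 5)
Your proposal is correct and follows essentially the same route as the paper's proof: the descent lemma from $\eta$-smoothness, the split $\bm{h}_t = \tilde{\bm{g}}_t + \frac{\gamma}{1-\gamma}(\bm{g}_t - \tilde{\bm{g}}_t)$ with unbiasedness handling the signal term and Cauchy--Schwarz plus smoothness bounding the perturbation term by $\sqrt{n}G_\infty\eta\rho_t$, the quadratic bound producing $5\gamma^2-4\gamma+1$, and the same telescoping with the integral comparisons $\sum_t t^{-1/2} \geq 2(\sqrt{T}-1)$ and $\sum_t t^{-1} \leq 1+\log T$, recovering the exact constants $C_3$, $C_4$, $C_5$. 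The only (cosmetic) difference is that you bound $\|\bm{g}_t-\tilde{\bm{g}}_t\|$ pathwise via smoothness of $\ell_t$, whereas the paper rewrites it as $\nabla L(\bm{w}_t+\bm{\delta}_t)-\nabla L(\bm{w}_t)$ inside the expectation and applies smoothness of $L$; both steps carry the same content and the same (mild) informality.
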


\begin{proof}
	By assumption 1, we have
	\begin{equation}
		\begin{aligned}
			& L(\bm{w}_{t+1}) \leq L(\bm{w}_t) + \left<\nabla{}L(\bm{w}_t), \bm{w}_{t+1} - \bm{w}_t\right> + \frac{\eta}{2}\|\bm{w}_{t+1} - \bm{w}_t\|^2 \\
			=& L(\bm{w}_t) - \left<\nabla{}L(\bm{w}_t), \alpha_{t}\left(\tilde{\bm{g}}_t + \frac{\gamma}{1 - \gamma}\left(\bm{g}_t - \tilde{\bm{g}}_t\right)\right)\right> +  \frac{\eta}{2}\|\bm{w}_{t+1} - \bm{w}_t\|^2.
		\end{aligned}
		\label{eq:th2_3}
	\end{equation}
	Rearranging Eq.~\eqref{eq:th2_3} and taking expectation both sides, by assumptions 3, we get
	\begin{equation}
		\begin{aligned}
			& \alpha_{t}\mathbb{E}[\|\nabla{}L(\bm{w}_t)\|^2] \leq \mathbb{E}[L(\bm{w}_t) - L(\bm{w}_{t + 1})] \\
			& - \alpha_{t}\mathbb{E}\left[\left<\nabla{}L(\bm{w}_t), \frac{\gamma}{1 - \gamma}\left(\nabla{}L(\bm{w}_t + \bm{\delta}_t) - \nabla{}L(\bm{w}_t)\right)\right>\right] + \frac{\eta}{2}\|\bm{w}_{t+1} - \bm{w}_t\|^2 \\
			\leq& \mathbb{E}[L(\bm{w}_t) - L(\bm{w}_{t + 1})] + \alpha_{t}\frac{\gamma}{1 - \gamma}\mathbb{E}[\|\nabla{}L(\bm{w}_t)\|\|\nabla{}L(\bm{w}_t + \bm{\delta}_t) - \nabla{}L(\bm{w}_t)\|] \\
			& + \frac{\eta}{2}\alpha_{t}^2\left\|\frac{\gamma}{1 - \gamma}\bm{g}_t + \frac{1- 2\gamma}{1-\gamma}\tilde{\bm{g}}_t\right\|^2 \\
			\leq& \mathbb{E}[L(\bm{w}_t) - L(\bm{w}_{t + 1})] + \frac{\sqrt{n}G_{\infty}\eta\gamma}{1 - \gamma}\alpha_{t}\rho_{t} + \frac{5\gamma^2 - 4\gamma + 1}{(1 - \gamma)^2}nG_{\infty}^2\eta\alpha_{t}^2.
		\end{aligned}
		\label{eq:th2_4}
	\end{equation}
	Telescoping Eq.~\eqref{eq:th2_4} for $t = 1$ to $T$, we have
	\begin{equation}
		\begin{aligned}
			& \sum_{t=1}^{T}\alpha_{t}\mathbb{E}[\|\nabla{}L(\bm{w}_t)\|^2] \leq \mathbb{E}[L(\bm{w}_1) - L(\bm{w}_{T+1})] + \frac{\sqrt{n}G_{\infty}\eta\gamma}{1 - \gamma}\sum_{t=1}^{T}\alpha_{t}\rho_{t} \\
			& +  \frac{5\gamma^2 - 4\gamma + 1}{(1 - \gamma)^2}nG_{\infty}^2\eta\sum_{t=1}^{T}\alpha_{t}^2 \\
			\leq& L(\bm{w}_1) + \frac{\sqrt{n}G_{\infty}\eta\gamma}{1 - \gamma}\sum_{t=1}^{T}\alpha_{t}\rho_{t} +  \frac{5\gamma^2 - 4\gamma + 1}{(1 - \gamma)^2}nG_{\infty}^2\eta\sum_{t=1}^{T}\alpha_{t}^2.
		\end{aligned}
		\label{eq:th2_5}
	\end{equation}
	Since
	\begin{equation}
		\begin{aligned}
			\sum_{t = 1}^{T}\alpha_t &=  \sum_{t = 1}^{T}\frac{\alpha}{\sqrt{t}} = \alpha\left(\int_{1}^{2}\frac{1}{\sqrt{1}}ds + \cdots + \int_{T - 1}^{T}\frac{1}{\sqrt{T}}ds\right) \\
			&> \alpha\int_{1}^{T}\frac{1}{\sqrt{s}}ds = 2\alpha\left(\sqrt{T} - 1\right), \\
			\sum_{t = 1}^{T}\alpha_t^2 &=  \sum_{t = 1}^{T}\frac{\alpha^2}{t} = \alpha^2\left(1 + \int_{2}^{3}\frac{1}{2}ds + \cdots + \int_{T-1}^{T}\frac{1}{T}ds\right) \\
			&< \alpha^2\left(1 + \int_{2}^{T}\frac{1}{s - 1}ds\right) = \alpha^2\left(\log(T - 1) + 1 \right) \\
			&< \alpha^2\left(\log{}T + 1 \right),
		\end{aligned}
		\label{eq:th2_6}
	\end{equation}
	substituting Eq.~\eqref{eq:th2_6} into Eq.~\eqref{eq:th2_5}, we have
	\begin{equation}
		\begin{aligned}
			& \min_{t\in[T]}\mathbb{E}[\|\nabla{}L(\bm{w}_t)\|^2] \\
			\leq& \frac{1}{\sum_{t=1}^{T}\alpha_{t}}\bigg( L(\bm{w}_1) + \frac{\sqrt{n}G_{\infty}\eta\gamma}{1 - \gamma}\sum_{t=1}^{T}\alpha_{t}\rho_{t} +  \frac{5\gamma^2 - 4\gamma + 1}{(1 - \gamma)^2}nG_{\infty}^2\eta\sum_{t=1}^{T}\alpha_{t}^2 \bigg) \\
			\leq& \frac{1}{\sqrt{T} - 1} \bigg(\frac{L(\bm{w}_1)}{2\alpha} + \frac{\sqrt{n}G_{\infty}\eta\gamma}{2\alpha(1 - \gamma)}\sum_{t=1}^{T}\alpha_{t}\rho_{t} \\
			&+ \frac{5\gamma^2 - 4\gamma + 1}{2(1 - \gamma)^2}nG_{\infty}^2\eta\alpha(\log{}T + 1)\bigg).
		\end{aligned}
		\label{eq:th2_final}
	\end{equation}
	This completes the proof.
\end{proof}

Then we have the following corollary.

\begin{corollary}
	Suppose $\rho_{t} = \rho / \sqrt{t}$, then we have
	\begin{equation*}
		\begin{aligned}
			&\min_{t\in[T]}\mathbb{E}[\|\nabla{}f(\bm{w}_t)\|^2] \\
	  \leq& \frac{1}{\sqrt{T} - 1}\bigg(C_3 + C_5 + \alpha\rho{}C_4 + (C_5 + \alpha\rho{}C_4)\log{T}\bigg),
		\end{aligned}
	\end{equation*}
	where $C_3\sim{}C_5$ are the same with Theorem~\ref{th:non-convex}.
	\label{coro:non-convex}
\end{corollary}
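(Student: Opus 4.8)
The plan is to specialize the bound of Theorem~\ref{th:non-convex} to the particular schedule $\rho_t = \rho/\sqrt{t}$ by evaluating the only schedule-dependent quantity in that bound, namely $C_4\sum_{t=1}^{T}\alpha_t\rho_t$. Since we already have $\alpha_t = \alpha/\sqrt{t}$ and now assume $\rho_t = \rho/\sqrt{t}$, their product collapses to $\alpha_t\rho_t = \alpha\rho/t$, so that $\sum_{t=1}^{T}\alpha_t\rho_t = \alpha\rho\sum_{t=1}^{T}\frac{1}{t}$. Everything else in the theorem's bound ($C_3$, $C_5$, the $C_5\log T$ term, and the $1/(\sqrt{T}-1)$ prefactor) is untouched.

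Next I would bound the resulting harmonic sum. This is exactly the logarithmic estimate already established for $\sum_{t=1}^{T}\alpha_t^2$ in the proof of Theorem~\ref{th:non-convex}: by the same integral comparison one obtains $\sum_{t=1}^{T}\frac{1}{t} < \log T + 1$. Consequently $C_4\sum_{t=1}^{T}\alpha_t\rho_t < \alpha\rho C_4(\log T + 1) = \alpha\rho C_4 + \alpha\rho C_4\log T$, which splits cleanly into a constant piece and a logarithmic piece.

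Finally I would substitute this estimate back into the bound of Theorem~\ref{th:non-convex} in place of the term $C_4\sum_{t=1}^{T}\alpha_t\rho_t$. Collecting the constant contributions as $C_3 + C_5 + \alpha\rho C_4$ and the logarithmic contributions as $(C_5 + \alpha\rho C_4)\log T$, all multiplied by the common factor $1/(\sqrt{T}-1)$, reproduces the stated inequality verbatim.

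No step here poses a genuine obstacle: the argument is a direct substitution followed by the standard harmonic-series bound, and the single observation that does the work is that the logarithmic estimate used for $\sum_{t=1}^{T}\alpha_t^2$ in the parent theorem applies identically to $\sum_{t=1}^{T}\frac{1}{t}$. The only conceptual point worth noting is why a $\log T$ factor survives in the final rate: the schedule $\rho_t = \rho/\sqrt{t}$ makes $\alpha_t\rho_t$ decay only like $1/t$, so the sharpness-regularization term contributes an $O(\log T/\sqrt{T})$ overhead on top of the baseline $O(1/\sqrt{T})$ convergence rate.
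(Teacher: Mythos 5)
Your proposal is correct and matches the paper's own proof essentially verbatim: both reduce the corollary to the harmonic-sum estimate $\sum_{t=1}^{T}\frac{1}{t} < \log T + 1$ applied to $\sum_{t=1}^{T}\alpha_t\rho_t = \alpha\rho\sum_{t=1}^{T}\frac{1}{t}$, then substitute into the bound established in Theorem~\ref{th:non-convex} and regroup the constant and logarithmic terms. No gaps; nothing further is needed.
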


\begin{proof}
	Since $\rho_{t} = \rho / \sqrt{t}$, we have
	\begin{equation}
		\begin{aligned}
			\sum_{t=1}^{T}\alpha_{t}\rho_{t} = \alpha\rho\sum_{t=1}^{T}\frac{1}{t} < \alpha\rho(\log{T} + 1).
		\end{aligned}
		\label{eq:coro2_1}
	\end{equation}
	Substituting Eq.~\eqref{eq:coro2_1} into Eq.~\eqref{eq:th2_final}, we finish the proof.
\end{proof}

Corollaries~\ref{coro:non-convex} implies the convergence (to the stationary point) rate of WSAM is $O(\log{T} / \sqrt{T})$ in non-convex settings.

\subsection{Generalization of WSAM}
In this section, we are interested in binary classification problems, i.e., the label set $\mathcal{Y} = \{0, 1\}$, and focus on the 0-1 loss, i.e., $\ell(h_{\bm{w}}(\bm{x}), y) = \mathbf{I}(h_{\bm{w}}(\bm{x}) \neq y)$ where $\mathbf{I}$ is the indicator function. Followed by \citet{uniform_convergence, uml, pac-bayes, nonvacuous_gen, sam}, we have the following generalization property.
\begin{theorem}
	Let $\mathcal{H} = \{h_{\bm{w}}: \bm{w}\in\mathbb{R}^n\}$ be a hypothesis class of functions from a domain $\mathcal{X}$ to $\{0, 1\}$ and let the loss function be the 0-1 loss. Assume that $\text{VCdim}(\mathcal{H}) = d < \infty$ and $L_{\mathcal{D}}(\bm{w}) \leq\mathbb{E}_{\bm{\epsilon}\sim{}N(\bm{0}, \rho^2\mathbb{I})}[L_{\mathcal{D}}(\bm{w} + \bm{\epsilon})]$. Then for any $\rho > 0$, $\gamma\in[0, 1)$ and any distribution $\mathcal{D}$, with probability of at least $1 - \delta$ over the choice of the training set $\mathcal{S}$ which has $m$ elements drawn i.i.d. according to $\mathcal{D}$, we have
	\begin{equation*}
		L_{\mathcal{D}}(\bm{w}) \leq L_{\mathcal{S}}^{WSAM}(\bm{w}) + \frac{2|1 - 2\gamma|}{1 - \gamma}\sqrt{\frac{C_1}{m}} + \frac{\gamma}{1 - \gamma}\sqrt{\frac{C_2 + C_3}{m - 1}}.
	\end{equation*}
	where $L_{\mathcal{S}}^{WSAM}$ is defined in Eq.~\eqref{eq:WSAM} and $C_1\sim{}C_3$ are defined as follows:
	\begin{equation*}
		\begin{aligned}
			C_1 &= 8d\log(em / d) + 2\log(4 / \delta), \\
			C_2 &= n\log\left(1 + \frac{\|\bm{w}\|^2}{\rho^2}\left(1 + \sqrt{\frac{\log(m)}{n}}\right)^2\right), \\
			C_3 &= 4\log(m / \delta) + 8\log(6m + 3n).
		\end{aligned}
	\end{equation*}
	\label{th:generalization}
\end{theorem}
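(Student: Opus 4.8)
The plan is to exploit that the WSAM loss in Eq.~\eqref{eq:WSAM} is an affine combination of the vanilla loss and the SAM loss whose two coefficients $\frac{1-2\gamma}{1-\gamma}$ and $\frac{\gamma}{1-\gamma}$ sum to one, and to control the population risk contributed by each piece with a different tool: a classical VC/PAC uniform-convergence bound for the vanilla part (yielding the $C_1$ term) and the PAC-Bayes argument of \citet{sam} for the SAM part (yielding the $C_2,C_3$ terms). The novelty is the recombination, not either ingredient. First I would record the decomposition $L_{\mathcal{D}}(\bm{w}) = \frac{1-2\gamma}{1-\gamma}L_{\mathcal{D}}(\bm{w}) + \frac{\gamma}{1-\gamma}L_{\mathcal{D}}(\bm{w})$ and treat the two summands separately. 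The coefficient $\frac{1-2\gamma}{1-\gamma}$ changes sign at $\gamma=\tfrac12$, so for the first summand I would use only the two-sided estimate $\frac{1-2\gamma}{1-\gamma}L_{\mathcal{D}}(\bm{w}) \le \frac{1-2\gamma}{1-\gamma}L_{\mathcal{S}}(\bm{w}) + \frac{|1-2\gamma|}{1-\gamma}\,|L_{\mathcal{D}}(\bm{w})-L_{\mathcal{S}}(\bm{w})|$, valid for either sign; this is what produces the $|1-2\gamma|$ factor in the stated bound.

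Step two is the PAC part. Using $\text{VCdim}(\mathcal{H})=d$ and the 0-1 loss, I would invoke the standard uniform-convergence theorem (symmetrization together with Sauer's lemma, as in the cited PAC references): with probability at least $1-\delta/2$, $\sup_{\bm{w}}|L_{\mathcal{D}}(\bm{w})-L_{\mathcal{S}}(\bm{w})| \le 2\sqrt{C_1/m}$, where the growth-function bound $(em/d)^d$ yields the $8d\log(em/d)$ contribution and the $\delta/2$ confidence budget yields $2\log(4/\delta)$. Combined with the previous display this bounds the first summand by $\frac{2|1-2\gamma|}{1-\gamma}\sqrt{C_1/m}$.

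Step three is the Bayes-PAC part, where I would essentially re-run the SAM generalization proof for the second summand. Starting from the stated smoothing assumption $L_{\mathcal{D}}(\bm{w})\le \mathbb{E}_{\bm{\epsilon}\sim N(\bm{0},\rho^2\mathbb{I})}[L_{\mathcal{D}}(\bm{w}+\bm{\epsilon})]$, I apply McAllester's PAC-Bayes bound with posterior $N(\bm{w},\rho^2\mathbb{I})$ and a data-independent Gaussian prior, taking a union bound over a finite grid of prior variances so the KL term is valid without knowing $\|\bm{w}\|$ in advance; this collapses to the $C_2$ term and the $8\log(6m+3n)$ part of $C_3$, while the $\delta/2$ budget gives the $4\log(m/\delta)$ part. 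Passing from the expected perturbed empirical loss $\mathbb{E}_{\bm{\epsilon}}[L_{\mathcal{S}}(\bm{w}+\bm{\epsilon})]$ to the $\rho$-ball maximum $L^{SAM}_{\mathcal{S}}(\bm{w})$ via concentration of $\|\bm{\epsilon}\|$ and boundedness of the 0-1 loss then gives, with probability at least $1-\delta/2$, $L_{\mathcal{D}}(\bm{w}) \le L^{SAM}_{\mathcal{S}}(\bm{w}) + \sqrt{(C_2+C_3)/(m-1)}$, so the second summand is bounded by $\frac{\gamma}{1-\gamma}\sqrt{(C_2+C_3)/(m-1)}$.

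Finally a union bound over the two events (each failing with probability at most $\delta/2$) lets me add the two estimates; the surviving empirical terms $\frac{1-2\gamma}{1-\gamma}L_{\mathcal{S}}(\bm{w}) + \frac{\gamma}{1-\gamma}L^{SAM}_{\mathcal{S}}(\bm{w})$ recombine into $L^{WSAM}_{\mathcal{S}}(\bm{w})$ by Eq.~\eqref{eq:WSAM}, producing the claimed inequality. I expect the main obstacle to be the Bayes-PAC step, specifically (i) choosing the prior and executing the grid union bound so the KL divergence reduces exactly to $C_2$, and (ii) the concentration argument that converts the Gaussian-smoothed empirical loss into the $\rho$-radius maximum, since $\bm{\epsilon}\sim N(\bm{0},\rho^2\mathbb{I})$ has typical norm $\rho\sqrt{n}$ rather than $\rho$, and reconciling this scale mismatch is precisely what the $\bigl(1+\sqrt{\log(m)/n}\,\bigr)^2$ factor inside $C_2$ absorbs. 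By contrast the PAC half and the final affine recombination are routine.
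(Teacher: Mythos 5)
Your proposal follows essentially the same route as the paper's proof: the same affine decomposition $L_{\mathcal{D}}(\bm{w}) = \frac{1-2\gamma}{1-\gamma}L_{\mathcal{D}}(\bm{w}) + \frac{\gamma}{1-\gamma}L_{\mathcal{D}}(\bm{w})$, the VC uniform-convergence bound controlling the first summand (with $|1-2\gamma|$ absorbing the sign change at $\gamma=\tfrac12$), the PAC-Bayes bound of Theorem~2 in \citet{sam} controlling the second, and recombination of the empirical terms into $L_{\mathcal{S}}^{WSAM}(\bm{w})$ via Eq.~\eqref{eq:WSAM}. The only differences are inessential: the paper cites both ingredient bounds directly rather than re-deriving the Bayes-PAC one, and it does not split the confidence budget into $\delta/2+\delta/2$ as you do (on that point your version is the more careful, at the cost of slightly larger constants than stated).
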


\begin{proof}
	From Section 28.1 of \citet{uml} and Theorem 2 of \citet{sam}, we have
	\begin{equation*}
		\begin{aligned}
			& |L_{\mathcal{D}}(\bm{w}) - L_{\mathcal{S}}(\bm{w})| \leq 2\sqrt{\frac{8d\log(em / d) + 2\log(4 / \delta)}{m}}, \\
			& L_{\mathcal{D}}(\bm{w}) \leq \max_{\|\bm{\epsilon}\|\leq\rho}L_{\mathcal{S}}(\bm{w} + \bm{\epsilon}) + \frac{1}{\sqrt{m-1}}\Bigg(4\log(m / \delta) + 8\log(6m + 3n)  \\
			& \qquad\qquad + n\log\bigg(1 + \frac{\|\bm{w}\|^2}{\rho^2}\big(1 + \sqrt{\frac{\log(m)}{n}}\big)^2\bigg) \Bigg)^{1/2}.
		\end{aligned}
	\end{equation*}
	Hence, we have
	\begin{equation*}
		\begin{aligned}
			L_{\mathcal{D}}(\bm{w}) = & \frac{1 - 2\gamma}{1 - \gamma}L_{\mathcal{D}}(\bm{w}) + \frac{\gamma}{1 - \gamma}L_{\mathcal{D}}(\bm{w}) \\
			\leq& \frac{1 - 2\gamma}{1 - \gamma}L_{\mathcal{S}}(\bm{w}) + \frac{2|1 - 2\gamma|}{1 - \gamma}\sqrt{\frac{8d\log(em / d) + 2\log(4 / \delta)}{m}} \\
			& + \frac{\gamma}{1 - \gamma}\max_{\|\bm{\epsilon}\|\leq\rho}L_{\mathcal{S}}(\bm{w} + \bm{\epsilon}) + \frac{\gamma}{1 - \gamma}\frac{1}{\sqrt{m-1}}\Bigg(4\log(m / \delta) \\
			& + 8\log(6m + 3n) + n\log\bigg(1 + \frac{\|\bm{w}\|^2}{\rho^2}\big(1 + \sqrt{\frac{\log(m)}{n}}\big)^2\bigg) \Bigg)^{1/2} \\
			=& L_{\mathcal{S}}^{WSAM}(\bm{w}) + \frac{2|1 - 2\gamma|}{1 - \gamma}\sqrt{\frac{8d\log(em / d) + 2\log(4 / \delta)}{m}} \\
			& + \frac{\gamma}{1 - \gamma}\frac{1}{\sqrt{m-1}}\Bigg(n\log\bigg(1 + \frac{\|\bm{w}\|^2}{\rho^2}\big(1 + \sqrt{\frac{\log(m)}{n}}\big)^2\bigg) \\
			& + 4\log(m / \delta) + 8\log(6m + 3n) \Bigg)^{1/2}.
		\end{aligned}
	\end{equation*}
	This completes the proof.
\end{proof}

Note that we assume $\rho$ ($\rho_{t}$) decreases to zero for proving the convergence in both convex and non-convex settings. However, the generalization bound would go to infinity if $\rho$ decreases to zero. In practice, we keep $\rho$ be a constant. To prove the convergence when $\rho$ is constant would be an interesting problem for the future work.

	\section{Experiments}
\label{sec:6}
In this section, we conduct experiments with a wide range of tasks to verify the effectiveness of WSAM.

\subsection{Image Classification from Scratch}
\label{subsec:image-scratch}
We first study WSAM's performance for training models from scratch on Cifar10 and Cifar100 datasets.
The models we choose include ResNet18 \cite{He2015} and WideResNet-28-10 \cite{wrn}.
We train the models on both Cifar10 and Cifar100 with a predefined batch size, 128 for ResNet18 and 256 for WideResNet-28-10.
The base optimizer used here is \textsc{SgdM} with momentum 0.9.
Following the settings of SAM \cite{sam}, each vanilla training runs twice as many epochs as a SAM-like training run.
We train both models for 400 epochs (200 for SAM-like optimizers), and use cosine scheduler to decay the learning rate.
Note that we do not use any advanced data augmentation methods, such as cutout regularization \cite{Cutout} and AutoAugment \cite{autoaugment}.

For both models, we determine the learning rate and weight decay using a joint grid search for vanilla training,
and keep them invariant for the next SAM experiments.
The search range is \{0.05, 0.1\} and \{1e-4, 5e-4, 1e-3\} for learning rate and weight decay, respectively.
Since all SAM-like optimizers have a hyperparameter $\rho$ (the neighborhood size), we then search for the best $\rho$ over the SAM optimizer,
and use the same value for other SAM-like optimizers.
The search range for $\rho$ is \{0.01, 0.02, 0.05, 0.1, 0.2, 0.5\}.
We conduct independent searches over each optimizer individually for optimizer-specific hyperparameters and report the best performance.
We use the range recommended in the corresponding article for the search.
For GSAM, we search for $\alpha$ in \{0.01, 0.02, 0.03, 0.1, 0.2, 0.3\}.
For ESAM, we search for $\beta$ in \{0.4, 0.5, 0.6\} and $\gamma$ in \{0.4, 0.5, 0.6\}.
For WSAM, we search for $\gamma$ in \{0.5, 0.6, 0.7, 0.8, 0.82, 0.84, 0.86, 0.88, 0.9, 0.92, 0.94, 0.96\}.
We repeat the experiments five times with different random seeds and report the mean error and the associated standard deviation.
We conduct the experiments on a single NVIDIA A100 GPU.
Hyperparameters of the optimizers for each model are summarized in Tab.~\ref{tab:hyper-cifar}.

Tab.~\ref{tab:cifar-results-acc} gives the top-1 error for ResNet18, WRN-28-10 trained on Cifar10 and Cifar100 with different optimizers.
SAM-like optimizers improve significantly over the vanilla one, and WSAM outperforms the other SAM-like optimizers for both models on Cifar10/100.

\begin{table}[!htbp]
	\caption{Top-1 error (\%) for ResNet18, WRN-28-10 trained on Cifar10 and Cifar100 with different optimizers.}
	\medskip
	\label{tab:cifar-results-acc}
	\centering
	\begin{tabular}{cccc}
		\toprule
		                         &               & Cifar10                    & Cifar100                    \\
		\midrule
		\multirow{4}*{ResNet18}  & Vanilla (\textsc{SgdM}) & $ 4.32 \pm 0.07 $          & $ 20.51 \pm 0.20 $          \\
		                         & SAM           & $ 3.68 \pm 0.06 $          & $ 19.97 \pm 0.30 $          \\
		                         & ESAM          & $ 3.83 \pm 0.08 $          & $ 20.67 \pm 0.24 $          \\
		                         & GSAM          & $ 3.71 \pm 0.11 $          & $ 19.84 \pm 0.27 $          \\
		                         & WSAM          & $ \mathbf{3.62 \pm 0.10} $ & $ \mathbf{19.42 \pm 0.12} $ \\
		\midrule
		\multirow{4}*{WRN-28-10} & Vanilla (\textsc{SgdM}) & $ 3.73 \pm 0.09 $          & $ 19.27 \pm 0.12 $          \\
		                         & SAM           & $ 2.94 \pm 0.08 $          & $ 16.49 \pm 0.09 $          \\
		                         & ESAM          & $ 2.99 \pm 0.07 $          & $ 16.52 \pm 0.21 $          \\
		                         & GSAM          & $ 2.91 \pm 0.08 $          & $ 16.36 \pm 0.32 $          \\
		                         & WSAM          & $ \mathbf{2.74 \pm 0.07} $ & $ \mathbf{16.33 \pm 0.26} $ \\
		\bottomrule
	\end{tabular}
\end{table}

\begin{table*}[!htbp]
	\caption{Hyperparameters to reproduce experimental results on Cifar10 and Cifar100.}
	\medskip
	\label{tab:hyper-cifar}
	\centering
	\begin{tabular}{ccccccccccc}
		\toprule
		              & \multicolumn{5}{c}{Cifar10} & \multicolumn{5}{c}{Cifar100}                                                       \\
		\midrule
		ResNet18      & \textsc{SgdM}                 & SAM                           & ESAM & GSAM & WSAM & \textsc{SgdM} & SAM & ESAM & GSAM & WSAM \\
		\midrule
		learning rate & \multicolumn{5}{c}{0.05}     & \multicolumn{5}{c}{0.05}                                                           \\
		weight decay  & \multicolumn{5}{c}{1e-3}     & \multicolumn{5}{c}{1e-3}                                                           \\
		$\rho$        & -                             & 0.2                           & 0.2  & 0.2  & 0.2  & -   & 0.2 & 0.2  & 0.2  & 0.2  \\
		$\alpha$      & -                             & -                             & -    & 0.02 & -    & -   & -   & -    & 0.03 & -    \\
		$\beta$       & -                             & -                             & 0.6  & -    & -    & -   & -   & 0.5  & -    & -    \\
		$\gamma$      & -                             & -                             & 0.6  & -    & 0.88 & -   & -   & 0.5  & -    & 0.82 \\
		\midrule
		WRN-28-10     & \textsc{SgdM}               & SAM                           & ESAM & GSAM & WSAM & \textsc{SgdM} & SAM & ESAM & GSAM & WSAM \\
		\midrule
		learning rate & \multicolumn{5}{c}{0.1}      & \multicolumn{5}{c}{0.1}                                                            \\
		weight decay  & \multicolumn{5}{c}{1e-3}     & \multicolumn{5}{c}{1e-3}                                                           \\
		$\rho$        & -                             & 0.2                           & 0.2  & 0.2  & 0.2  & -   & 0.2 & 0.2  & 0.2  & 0.2  \\
		$\alpha$      & -                             & -                             & -    & 0.01 & -    & -   & -   & -    & 0.2  & -    \\
		$\beta$       & -                             & -                             & 0.6  & -    & -    & -   & -   & 0.5  & -    & -    \\
		$\gamma$      & -                             & -                             & 0.6  & -    & 0.88 & -   & -   & 0.6  & -    & 0.94 \\
		\bottomrule
	\end{tabular}
\end{table*}

\subsection{Extra Training on ImageNet}
We further experiment on the ImageNet dataset \cite{ILSVRC15} using Data-Efficient Image Transformers \cite{vit_model}. We restore a pre-trained DeiT-base checkpoint\footnote{https://github.com/facebookresearch/deit}, and continue training for three epochs. The model is trained using a batch size of 256, \textsc{SgdM} with momentum 0.9 as the base optimizer, a weight decay of 1e-4, and a learning rate of 1e-5. We conduct the experiment on four NVIDIA A100 GPUs.

We search for the best $\rho$ for SAM in $\{0.05, 0.1, 0.5, 1.0, \cdots, 6.0\}$. The best $\rho=5.5$ is directly used without further tunning by GSAM and WSAM. After that, we search for the optimal $\alpha$ for GSAM in $\{0.01, 0.02, 0.03, 0.1, 0.2, 0.3\}$ and $\gamma$ for WSAM from 0.80 to 0.98 with a step size of 0.02.

The initial top-1 error rate of the model is $18.2\%$, and the error rates after the three extra epochs are shown in Tab.~\ref{tab:over-training-res}. We find no significant differences between the three SAM-like optimizers, while they all outperform the vanilla optimizer, indicating that they find a flatter minimum with better generalization.

\begin{table}[!htbp]
	\caption{Top-1 error (\%) for Deit-Base over-trained on ImageNet.}
	\medskip
	\label{tab:over-training-res}
	\centering
	\begin{tabular}{cc}
		\toprule
		                     & Top-1 error (\%)  \\
		\midrule
		Initial              & $18.2$            \\
		Vanilla (\textsc{SgdM})        & $18.17 \pm 0.005$ \\
		SAM                  & $18.01 \pm 0.007$ \\
		GSAM ($\alpha=0.02$) & $18.01 \pm 0.005$ \\
		WSAM ($\gamma=0.94$) & $18.01 \pm 0.003$ \\
		\bottomrule
	\end{tabular}
\end{table}

\subsection{Robustness to Label Noise}
As shown in previous works \cite{sam, asam, fsam}, SAM-like optimizers exhibit good robustness to label noise in the training set,
on par with those algorithms specially designed for learning with noisy labels \cite{Arazo2019, Jiang2020}.
Here, we compare the robustness of WSAM to label noise with SAM, ESAM, and GSAM.
We train a ResNet18 for 200 epochs on the Cifar10 dataset and inject symmetric label noise of noise levels $ 20\%$, $ 40\%$, $ 60\%$, and $ 80\%$ to the training set, as introduced in \citet{Arazo2019}.
We use \textsc{SgdM} with momentum 0.9 as the base optimizer, batch size 128, learning rate 0.05, weight decay 1e-3, and cosine learning rate scheduling.
For each level of label noise, we determine the common $\rho$ value using a grid search over SAM in \{0.01, 0.02, 0.05, 0.1, 0.2, 0.5\}.
Then, we search individually for other optimizer-specific hyperparameters to find the best performance.
Hyperparameters to reproduce our results are listed in Tab.~\ref{tab:label-noise-hyper}.
We present the results of the robustness test in Tab.~\ref{tab:label-noise}. WSAM generally achieves better robustness than SAM, ESAM, and GSAM.

\begin{table*}[!htbp]
	\caption{Hyperparameters to reproduce experimental results for ResNet18 on Cifar10 with different noise levels.}
	\medskip
	\label{tab:label-noise-hyper}
	\centering
	\begin{tabular}{ccccc}
		\toprule
		noise level (\%) & 20                                & 40                                & 60                                & 80                                 \\
		\midrule
		Vanilla          & -                                 & -                                 & -                                 & -                                  \\
		SAM              & $\rho=0.2$                        & $\rho=0.2$                        & $\rho=0.1$                        & $\rho=0.05$                        \\
		ESAM             & $\rho=0.2, \beta=0.5, \gamma=0.6$ & $\rho=0.2, \beta=0.5, \gamma=0.6$ & $\rho=0.1, \beta=0.6, \gamma=0.6$ & $\rho=0.05, \beta=0.5, \gamma=0.5$ \\
		GSAM             & $\rho=0.2, \alpha=0.01$           & $\rho=0.2, \alpha=0.02$           & $\rho=0.1, \alpha=0.3$            & $\rho=0.05, \alpha=0.3$            \\
		WSAM             & $\rho=0.2, \gamma=0.91$           & $\rho=0.2, \gamma=0.91$           & $\rho=0.1, \gamma=0.93$           & $\rho=0.05, \gamma=0.92$           \\
		\bottomrule
	\end{tabular}
\end{table*}

\begin{table*}[!htbp]
	\caption{Test of label noise. Top-1 accuracy (\%) for ResNet18 on Cifar10 with different noise levels.}
	\medskip
	\label{tab:label-noise}
	\centering
	\begin{tabular}{lcccc}
		\toprule
		noise level (\%) & 20                        & 40                        & 60                        & 80                        \\
		\midrule
		Vanilla                 & $88.06 \pm 0.48$          & $84.11 \pm 0.39$          & $79.15 \pm 0.43$          & $69.07 \pm 0.95$          \\
		SAM                     & $94.99 \pm 0.09$          & $93.28 \pm 0.16$          & $88.32 \pm 0.28$          & $77.57 \pm 0.51$          \\
		ESAM                    & $94.93 \pm 0.18$          & $92.69 \pm 0.48$          & $86.42 \pm 0.30$          & $32.29 \pm 4.67$          \\
		GSAM                    & $95.11 \pm 0.11$          & $93.25 \pm 0.12$          & $89.90 \pm 0.18$          & $\mathbf{79.09 \pm 0.91}$ \\
		WSAM                    & $\mathbf{95.18 \pm 0.12}$ & $\mathbf{93.33 \pm 0.11}$ & $\mathbf{89.95 \pm 0.12}$ & $78.30 \pm 0.92$          \\
		\bottomrule
	\end{tabular}
\end{table*}

\subsection{Effect of Geometric Structures of Exploration Region}
SAM-like optimizers can be combined with techniques like ASAM and Fisher SAM to shape the exploration neighborhood adaptively.
We conduct experiments with WRN-28-10 on Cifar10, and
compare the performance of SAM and WSAM using the adaptive and Fisher information methods, respectively, to understand how geometric structures of exploration region would affect the performance of SAM-like optimizers.

For parameters other than $\rho$ and $\gamma$, we reuse the configuration in Sec.~\ref{subsec:image-scratch}.
From previous studies \cite{asam, fsam}, $\rho$ is usually larger for ASAM and Fisher SAM. We search for the best $\rho$ in $\{0.1, 0.5, 1.0, \dots, 6.0\}$ and the best $\rho$ is 5.0 in both scenarios. Afterward, we search for the optimal $\gamma$ for WSAM from 0.80 to 0.94 with a step size of 0.02. The best $\gamma$ is 0.88 for both methods.

Surprisingly, the vanilla WSAM is found to be superior across the candidates, as seen in Tab.~\ref{tab:geo-region-res}. It is also worth noting that, contrary to what is reported in \citet{fsam}, the Fisher method reduces the accuracy and causes a high variance.
Therefore, it is recommended to use the vanilla WSAM with a fixed $\rho$ for stability and performance.

\begin{table}[!htbp]
	\caption{Top-1 error (\%) for WRN-28-10 trained on Cifar10.}
	\medskip
	\label{tab:geo-region-res}
	\centering
	\begin{tabular}{cccc}
		\toprule
		     & Vanilla                 & +Adaptive      & +Fisher        \\
		\midrule
		SAM  & $2.94\pm 0.08$          & $2.84\pm 0.04$ & $3.00\pm 0.13$ \\
		WSAM & $\mathbf{2.74\pm 0.07}$ & $2.90\pm 0.08$ & $3.45\pm 0.35$ \\
		\bottomrule
	\end{tabular}
\end{table}

\subsection{Ablation Study}
\label{sec:6.5}
In this section, we conduct an ablation study to gain a deeper understanding of the importance of the \text{``weight decouple''} technique in WSAM. As described in Section~
\ref{sec:4.1}, we compare a variant of WSAM without weight decouple, Coupled-WSAM (outlined in Algorithm~\ref{alg:coupled-WSAM}), to the original method.

\begin{algorithm}[htbp]
	\caption{Generic framework of Coupled-WSAM}
	\label{alg:coupled-WSAM}
	\begin{algorithmic}[1]
		\STATE {\bfseries Input:} parameters $\rho,\epsilon > 0$, $\gamma\in[0,1)$,
		$\bm{w}_1 \in \mathbb{R}^n$, step size $\{\alpha_t\}_{t=1}^T$
		\FOR{$t=1$ {\bfseries to} $T$}
		\STATE $\tilde{\bm{g}}_t = \nabla \ell_t(\bm{w}_t)$
		\STATE $\bm{\delta}_t = \rho\tilde{\bm{g}}_t / (\|\tilde{\bm{g}}_t\| + \epsilon)$
		\STATE $\bm{g}_t = \nabla \ell_t(\bm{w}_t + \bm{\delta}_t)$
		\STATE $\bm{h}_t = \frac{\gamma}{1-\gamma}\bm{g}_t + \frac{1 - 2\gamma}{1 - \gamma}\tilde{\bm{g}}_t$
		\STATE $\bm{m}_t = \phi_t(\bm{h}_1, \dots, \bm{h}_t)$ and $B_t = \psi_t(\bm{h}_1, \dots, \bm{h}_t)$
		\STATE $\bm{w}_{t+1} = \bm{w}_t - \alpha_t B_t^{-1}\bm{m}_t$
		\ENDFOR
	\end{algorithmic}
\end{algorithm}

The results are reported in Tab.~\ref{tab:ablation-study}.
Coupled-WSAM yields better results than SAM in most cases, and WSAM further improves performance in most cases,
demonstrating that the \text{``weight decouple''} technique is both effective and necessary for WSAM.

\begin{table}[!htbp]
	\caption{The ablation study of WSAM for ResNet18, WRN-28-10 on Cifar10 and Cifar100.}
	\medskip
	\label{tab:ablation-study}
	\centering
	\begin{tabular}{cccc}
		\toprule
		                         &              & Cifar10                    & Cifar100                    \\
		\midrule
		\multirow{4}*{ResNet18}  & SAM          & $ 3.68 \pm 0.06 $          & $ 19.97 \pm 0.30 $          \\
		                         & Coupled-WSAM & $ \mathbf{3.58 \pm 0.12} $ & $ 19.49 \pm 0.13 $          \\
		                         & WSAM         & $ 3.62 \pm 0.10 $          & $ \mathbf{19.42 \pm 0.12} $ \\
		\midrule
		\multirow{4}*{WRN-28-10} & SAM          & $ 2.94 \pm 0.08 $          & $ 16.49 \pm 0.09 $          \\
		                         & Coupled-WSAM & $ 2.91 \pm 0.06 $          & $ 16.49 \pm 0.09 $          \\
		                         & WSAM         & $ \mathbf{2.74 \pm 0.07} $ & $ \mathbf{16.33 \pm 0.26} $ \\
		\bottomrule
	\end{tabular}
\end{table}

\subsection{Minima Analysis}
Here, we further deepen our understanding of the WSAM optimizer by comparing the differences in the minima found by the WSAM and SAM optimizers. 
The sharpness at the minima can be described by the dominant eigenvalue of the Hessian matrix. The larger the eigenvalue, the greater the sharpness. This metric is often used in other literature \cite{sam,gsam,Kaddour2022}. 
We use the Power Iteration algorithm to calculate this maximum eigenvalue, a practical tool seen in \citet{hessianeigenthings}.

Tab.~\ref{tab:minima-analysis} shows the differences in the minima found by the SAM and WSAM optimizers.
We find that the minima found by the vanilla optimizer have smaller loss but greater sharpness, whereas the minima found by SAM have larger loss but smaller sharpness, thereby improving generalization.
Interestingly, the minima found by WSAM not only have much smaller loss than SAM but also have sharpness that is close to SAM. 
This indicates that WSAM prioritizes ensuring a smaller loss while minimizing sharpness in the process of finding minima.
Here, we present this surprising discovery, and further detailed research is left for future investigation.

\begin{table}[!htbp]
	\caption{Differences in the minima found by different optimizers for ResNet18 on Cifar10. $\lambda_{max}$ is the dominant Hessian eigenvalue.}
	\medskip
	\label{tab:minima-analysis}
	\centering
	\begin{tabular}{cccc}
		\toprule
		optimizer & loss & accuracy & $\lambda_{max}$ \\
		\midrule
		Vanilla(SGDM) & 0.0026 & 0.9575 & 62.58 \\
		SAM & 0.0339 & 0.9622 & 22.67 \\
		WSAM & 0.0089 & 0.9654 & 23.97 \\
		\bottomrule
	\end{tabular}
\end{table}

\subsection{Hyperparameter Sensitivity}
Compared to SAM, WSAM has an additional hyperparameter $\gamma$ that scales the size of the sharpness term.
Here we test the sensitivity of WSAM's performance to this hyperparameter.
We train ResNet18 and WRN-28-10 models on Cifar10 and Cifar100 with WSAM using a wide range of $\gamma$.
Results in Fig.~\ref{fig:wsam-sensitivity} show that WSAM is not sensitive to the choice of hyperparameter $\gamma$. We also find that the best performance of WSAM occurs almost always in the range between 0.8 and 0.95.

\begin{figure}[!htbp]
	\centering
	\includegraphics[width=0.8\linewidth]{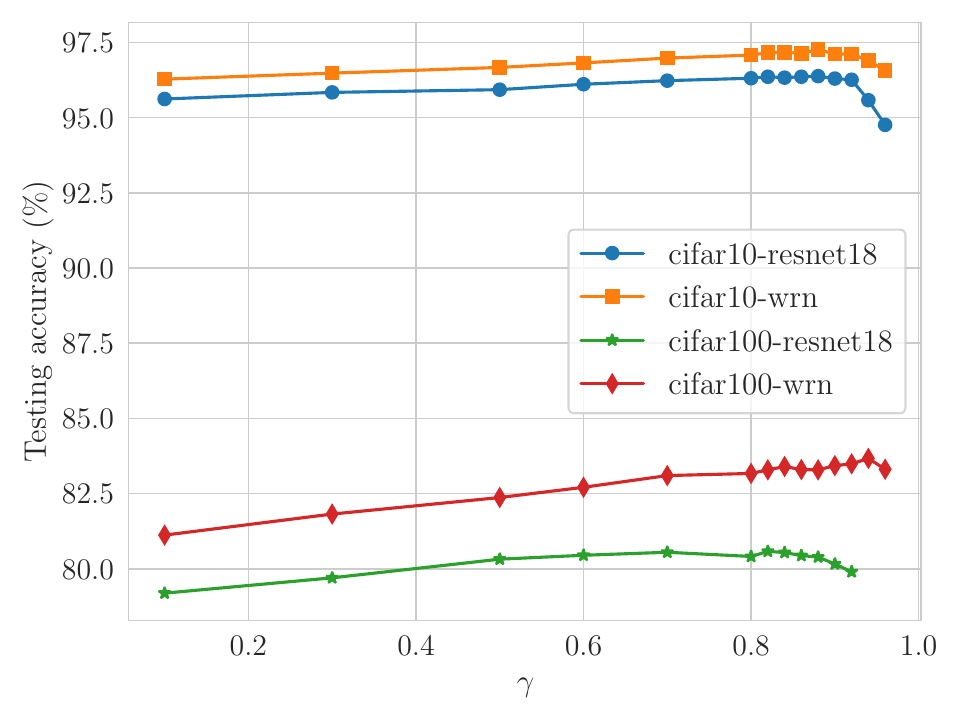}
	\caption{The sensitivity of WSAM's performance to the choice of $\gamma$.}
	\label{fig:wsam-sensitivity}
\end{figure}

	\section{Conclusion}
\label{sec:7}
In this paper, we revisit the structure of SAM, introducing a novel optimizer, called WSAM, which treats the sharpness as a regularization term, allowing for different weights for different tasks.
Additionally, the \text{``weight decouple''} technique is employed to further enhance the performance.
We prove the convergence rate in both convex and non-convex stochastic settings, and derive a generalization bound by combining PAC and Bayes-PAC techniques. Extensive empirical evaluations are performed on several datasets from distinct tasks. The results clearly demonstrate the advantages of WSAM in achieving better generalization.

	\section*{Acknowledgement}
	We thank {\bf Junping Zhao}  and {\bf Shouren Zhao} for their support in providing us with GPU resources.

	\bibliographystyle{ACM-Reference-Format}
	\balance
	\bibliography{reference}

\end{document}